\documentclass[dvipsnames]{article}
\PassOptionsToPackage{table}{xcolor}
\usepackage{colm2024_conference}

\usepackage{booktabs}
\usepackage{tabularx}
\usepackage{array}
\usepackage{enumitem}
\usepackage{wrapfig}
\usepackage{graphicx}
\usepackage{subcaption}
\usepackage{multirow}
\usepackage{amssymb}
\usepackage{mathtools}
\usepackage{amsthm}
\usepackage{thmtools}
\usepackage{thm-restate}
\usepackage[most,skins,theorems]{tcolorbox}
\usepackage[capitalize,noabbrev]{cleveref}

\usepackage{amsmath,amsfonts,bm}

\def\eqref#1{equation~\ref{#1}}
\def\1{\bm{1}}

\DeclareMathAlphabet{\mathsfit}{\encodingdefault}{\sfdefault}{m}{sl}
\SetMathAlphabet{\mathsfit}{bold}{\encodingdefault}{\sfdefault}{bx}{n}

\theoremstyle{plain}

\theoremstyle{definition}

\theoremstyle{remark}

\renewcommand{\subsectionautorefname}{Section}
\renewcommand{\sectionautorefname}{Section}

\definecolor{-}{rgb}{0.25,0.41,0.88}
\definecolor{+}{rgb}{0.70,0.13,0.13}

\newcolumntype{Y}{>{\raggedright\arraybackslash}X}

\tcbset{
  takeawaysbox/.style={
    float=ht,
    title=Takeaway,
    colback=teal!5!white,
    colframe=teal!65!black,
    fonttitle=\bfseries\sffamily\scshape\small,
    coltitle=white,
    colbacktitle=teal!50!black,
    enhanced,
    attach boxed title to top left={xshift=2mm, yshifttext=-1mm, yshift=-2mm},
    boxed title style={outer arc=2mm, arc=1.5mm, colframe=teal!65!black, colback=teal!50!black},
    width=\linewidth,
    arc=3mm,
    boxrule=0.8pt,
    boxsep=0.5pt
  }
}
\newtcolorbox{takeaways}{takeawaysbox}

\title{Clipping Bottleneck: \\Stabilizing RLVR via Stochastic Recovery of Near-Boundary Signals}

\author{%
\large Qwen Pilot Team, Alibaba Group \thanks{Full author list available in the \hyperref[sec:authors]{Authors} section.}%
  \\[1em]
}

\begin{document}

\maketitle

\begin{abstract}
Reinforcement Learning with Verifiable Rewards (RLVR) has emerged as a central paradigm for scaling LLM reasoning, yet its optimization often suffers from training instability and suboptimal convergence. Through a systematic dissection of clipping-based GRPO-style objectives, we identify the rigid clipping decision induced by hard clipping as a key practical bottleneck in the studied RLVR setups. Specifically, our analysis suggests that informative signals can lie in the \textbf{near-boundary} region just beyond the clipping threshold, and are therefore discarded by the standard hard-clipping rule. Notably, once this bottleneck is precisely identified, even simple stochastic perturbations at the boundary can recover meaningful performance gains. Building on this finding, we propose \textbf{Near-boundary Stochastic Rescue (NSR)}, a minimal, plug-and-play modification that stochastically retains these slightly out-of-bound tokens to recover lost signals.
While NSR, via stochastic sampling, can be interpreted as inducing an implicit gradient decay in expectation, our ablations reveal that its stochastic, boundary-local rescue mechanism is consistently more effective than deterministic gradient decay.
Validated by extensive experiments across model sizes from 7B to 30B and both dense and MoE architectures, as a plug-and-play solution, NSR substantially improves training stability and delivers consistent gains over strong baselines such as DAPO and GSPO.
\end{abstract}

\section{Introduction}

Reinforcement Learning with Verifiable Rewards \citep{guo2025deepseek, jaech2024openai,meng2026sparse,ma2026fipo} has emerged as the critical engine for scaling the reasoning capabilities of Large Language Models \citep{yang2025qwen3, team2025kimi, comanici2025gemini}. By leveraging deterministic verifiers in domains such as mathematics and coding, this paradigm enables models to iteratively refine their reasoning paths through outcome-based supervision.
In practice, clipping-based GRPO-style~\citep{shao2024deepseekmath} objectives based on importance ratios---such as GRPO~\citep{shao2024deepseekmath} and DAPO~\citep{yu2025dapo}---are widely adopted due to their reliable implementations and theoretical justification. Despite their popularity, these methods often exhibit significant instability and suboptimal convergence: training is hypersensitive to hyperparameters~\citep{liu2025part}, prone to entropy collapse or spikes~\citep{wu2025quantile, cui2025entropy}, suffers from poor reproducibility~\citep{hochlehnert2025sober}, and frequently oscillates during optimization~\citep{hao2025rethinking}.

\begin{figure}[t]
    \centering
    \includegraphics[width=\textwidth]{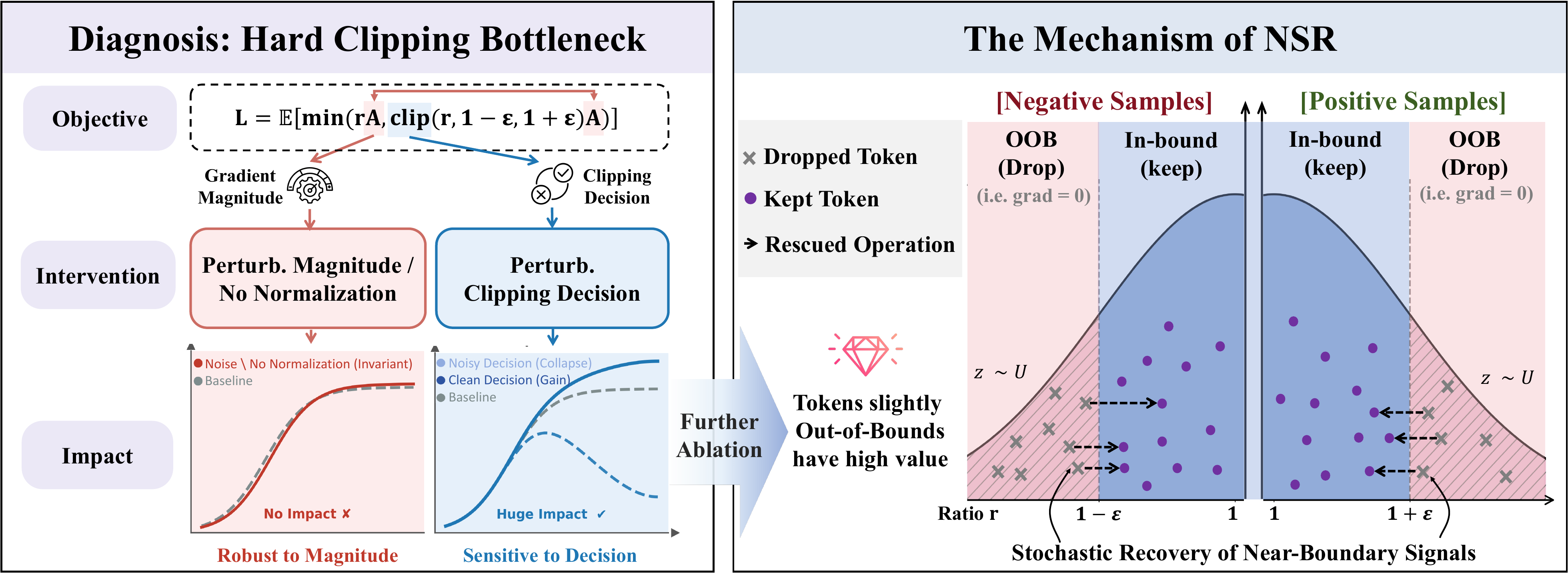}
    \caption{
    \textbf{Overview of Diagnosis and the NSR Solution.}
    \textbf{Left (Diagnosis):} Controlled interventions reveal that training is robust to gradient magnitude (red) but hypersensitive to the binary clipping decision (blue), identifying the rigid discarding of boundary signals as a key bottleneck in the studied clipping-based setup.
    \textbf{Right (Mechanism):} NSR mitigates this by stochastically rescuing tokens that fall slightly outside the trust region, transforming the rigid binary decision into a probabilistic admission process that retains informative near-boundary signals.
    }
    \label{fig1:main fig}
\end{figure}

To mitigate these issues, prior research has primarily focused on refining training dynamics through empirical statistical indicators. Common strategies range from modulating exploration via token-level metrics, such as low-probability~\citep{yang2025not, huang2025low, huang2026direction} or high-entropy tokens~\citep{wang2025beyond}, to differentiating updates based on sample-level attributes like correctness~\citep{zhu2025surprising} and difficulty~\citep{ji2025difficulty}.
While effective in practice, these approaches mostly act as \textbf{external adjustments} to the training signals, ignoring the \textbf{optimization objective} itself.
Consequently, a fundamental question remains unaddressed: within clipping-based RLVR objectives, which internal mechanism creates a key bottleneck for stability and convergence?

Within this scope, we perform a systematic analysis of the widely used GRPO-style clipping objective:
\begin{equation}
    \mathcal{J}_{\text{GRPO}}(\theta) = \mathbb{E} \left[ \min \left( r_t(\theta)\hat{A}_t, \underbrace{\text{clip}(r_t(\theta), 1 - \epsilon, 1 + \epsilon)\hat{A}_t}_{\text{Hard-Clipping Mechanism}} \right) \right],
\end{equation}
where $r_t(\theta)$ is the importance ratio and $\hat{A}_t$ is the advantage. Mechanically, the highlighted hard-clipping operation creates a rigid gradient cutoff: any token falling outside the trust region is \textbf{detached} from the computation graph, contributing zero gradient to the policy update.

Through a systematic dissection of the GRPO-style clipping objective, we identify this strict \emph{clipping decision} as a key source of instability and signal loss in the studied setups (diagnostic process visualized in the Left panel of~\autoref{fig1:main fig}).
While intended to shield the model from overly large updates, this binary keep/drop mechanism is indiscriminate regarding the degree of violation---it fails to distinguish between severe divergences and marginal shifts.
Consequently, it can discard informative learning signals located in the near-boundary region just beyond the clipping threshold.
Our controlled experiments corroborate this finding: even simple stochastic interventions at the clipping boundary---without modifying the core optimization algorithm---already yield consistent improvements, further confirming the primacy of the clipping decision as the key limiting factor.

Motivated by this diagnosis, we propose \textbf{Near-boundary Stochastic Rescue (NSR)}, a minimal fix focused on the clipping boundary. For tokens falling into this near-boundary region, NSR avoids indiscriminately eliminating their gradients. Instead, it employs a stochastic sampling rule to retain these tokens based on their degree of deviation from the clipping threshold (\autoref{fig1:main fig}, Right). This transforms the rigid binary decision into a probabilistic admission process, preserving potentially useful learning signals despite slight boundary violations. Crucially, NSR does not seek to indiscriminately expand the trust region; rather, it adjusts the boundary behavior, ensuring policy updates remain conservative while \textbf{retaining key gradients}.

We further provide a mechanistic interpretation of NSR.
Theoretically, NSR functions in expectation as an implicit soft-clipping mechanism, imposing an approximate $1/r^2$ gradient decay on out-of-bound tokens.
However, our ablation studies reveal that the benefits of NSR extend beyond this expectation-level smoothing: the stochastic formulation consistently outperforms deterministic soft-clipping baselines, indicating that the \emph{probabilistic admission} near the boundary is an important driver of \textbf{optimization robustness}.
Validated by extensive experiments consuming \textbf{hundreds of thousands of GPU hours} across models ranging from 7B to 30B parameters and covering both dense and MoE architectures, NSR yields consistent gains as a plug-in fix on strong clipping-based baselines such as DAPO and GSPO.

In summary, we (i) diagnose hard-clipping decisions as a key source of instability and signal loss in the studied clipping-based GRPO-style RLVR setups, (ii) propose NSR to rescue near-boundary signals stochastically, and (iii) provide an expectation-level soft-clipping interpretation alongside ablations that isolate the benefit of stochastic boundary-local rescue.

\section{Preliminaries}
\label{sec:preliminaries}

This section reviews the foundational policy optimization frameworks that underpin our work: PPO and its value-network-free variants, GRPO and DAPO.

\textbf{Proximal Policy Optimization (PPO)~\citep{schulman2017proximal}.} PPO stabilizes training by constraining policy updates within a trust region around the old policy $\pi_{\theta_{\text{old}}}$. It maximizes a clipped surrogate objective defined as:
\begin{equation}
    \mathcal{J}_{\text{PPO}}(\theta) = \mathbb{E}_{(q,a)\sim\mathcal{D}, o\sim\pi_{\theta_{\text{old}}}(\cdot|q)} \Big[ \min \Big( r_t(\theta)\hat{A}_t,\; \text{clip}(r_t(\theta), 1 - \epsilon, 1 + \epsilon)\hat{A}_t \Big) \Big]
\end{equation}
where $r_t(\theta) = \frac{\pi_{\theta}(o_t|q,o_{<t})}{\pi_{\theta_{\text{old}}}(o_t|q,o_{<t})}$ denotes the token-level probability ratio at step $t$, $\hat{A}_t$ is the advantage estimated via a learned value network, and $\epsilon$ is the clipping coefficient.

\textbf{Group Relative Policy Optimization (GRPO)~\citep{shao2024deepseekmath}.} It eliminates the dependency on a value network to reduce computational overhead. Instead, it estimates relative advantages using group-based sampling. For a query $q$, a group of outputs $\{o_i\}_{i=1}^G$ is sampled from the old policy. The \textit{sequence-level} advantage for the $i$-th sample is computed as:
\begin{equation}
    \hat{A}_i = \frac{R_i - \mu_R}{\sigma_R}, \quad \text{where } R_i = \mathbb{I}(\text{Verify}(q, o_i)).
\end{equation}

\textbf{Decoupled Clip and Dynamic sAmpling Policy Optimization (DAPO)}~\citep{yu2025dapo}. DAPO extends GRPO by removing the explicit KL penalty and instead employs asymmetric clipping $(1 - \epsilon_{\text{low}}, 1 + \epsilon_{\text{high}})$ to amplify updates for advantageous actions, coupled with token-level advantage normalization. Furthermore, it enforces a dynamic sampling constraint that guarantees a mix of positive and negative samples within each group for stability. We adopt DAPO as the primary baseline for this work.

\textbf{Implementation Note: The Mechanics of Hard Clipping.} In practical implementations of RLVR algorithms for LLMs, the theoretical $\text{clip}$ operator is universally instantiated via clamping functions (\texttt{torch.clamp} in PyTorch). Mechanistically, this operation creates a ``hard'' gradient boundary:
\begin{equation}
    \frac{\partial \operatorname{clip}(r_t, l, u)}{\partial r_t} = \mathbb{I}(l \le r_t \le u).
\end{equation}
Consequently, whenever $r_t$ falls outside the trust region, its gradient is strictly zeroed out, effectively \emph{detaching} the token and discarding its learning signal.

\section{Empirical Diagnosis}

To precisely identify the root cause of instability and suboptimal convergence in RLVR, we conduct a set of causal interventions on two core aspects of the GRPO-style objective: (i) the \textbf{magnitude} of token-level gradients and (ii) the binary \textbf{decision} induced by hard clipping.

\subsection{Magnitude is Not the Culprit}

Many existing improvements employ re-weighting or normalization to stabilize training~\citep{wang2025reinforcement, liu2025understanding, xu2025single}, implicitly assuming that instability and suboptimal convergence in RLVR stem from the numerical scale of advantages or importance ratios. To test this hypothesis, we design two counterfactual interventions on the standard DAPO baseline:

\begin{itemize}[leftmargin=*, nosep]
  \item \textbf{Removing Advantage Normalization (w/o Norm; A):} We remove DAPO's default advantage normalization and directly use the raw binary rewards as advantage estimates, i.e., $\hat{A}_t = R_t \in \{1, -1\}$.
  \item \textbf{Advantage Scalar Noise (+ Adv. Noise; B):} We inject multiplicative noise into the advantage estimate: $\hat{A}_{\text{perturb}} = \hat{A}_t \cdot z$, where $z \sim \mathcal{U}(0.8, 1.2)$. Crucially, this affects only the update magnitude, leaving the clipping decision (based on $r_t$) unchanged.
\end{itemize}

\begin{wrapfigure}{r}{0.6\textwidth}
    \centering
    % \vspace{-10pt}
    \includegraphics[width=0.48\textwidth]{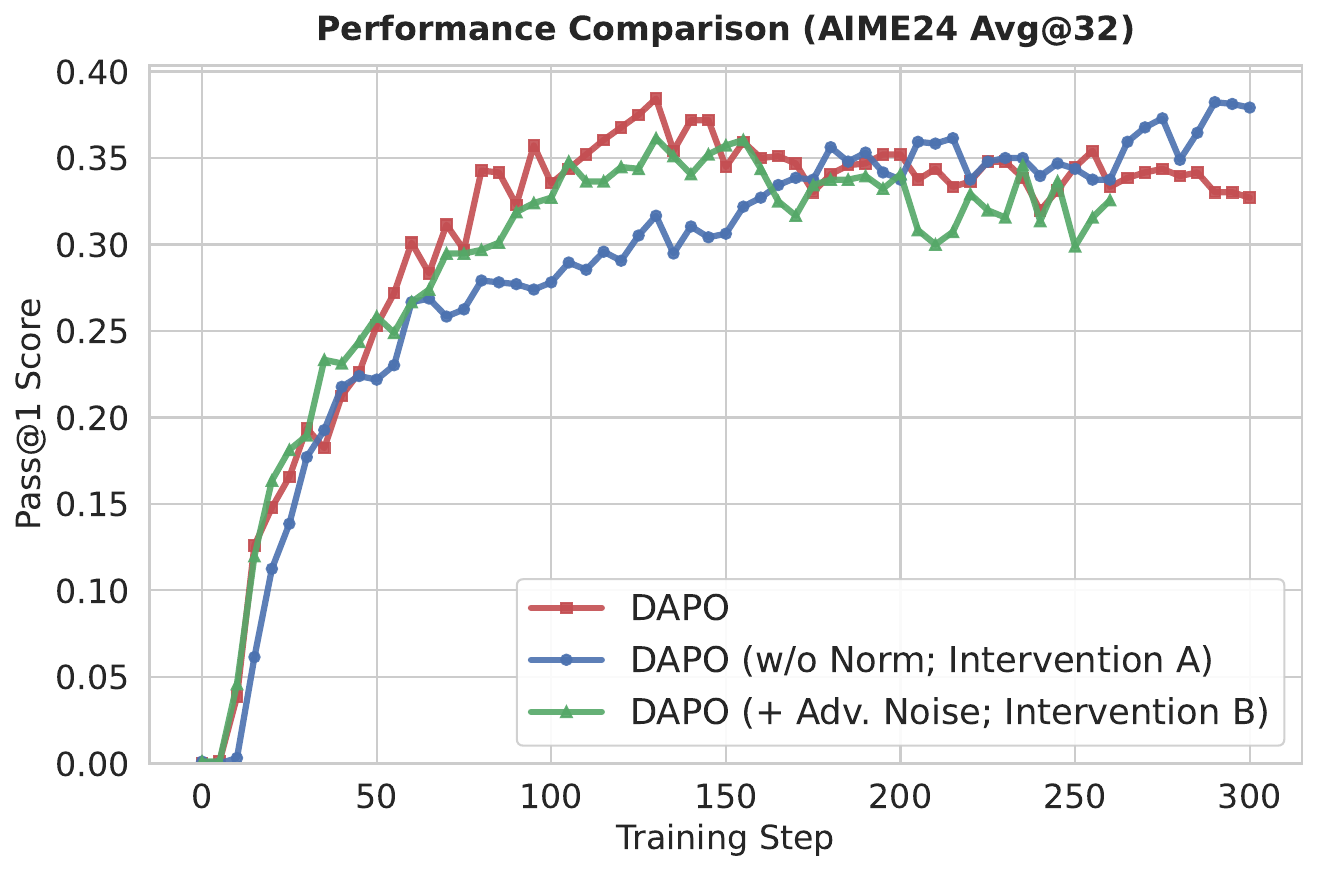}
    \caption{
    \textbf{Magnitude Insensitivity.} Comparisons between standard DAPO and magnitude-focused interventions on AIME24. Neither removing advantage normalization (\textit{w/o Norm}) nor applying scalar multiplicative perturbation to the advantage (\textit{+ Adv. Noise}) significantly deviates from the baseline trajectory.
        }
    \label{fig:fig2_Norm_Noise_plot}
    % \vspace{-10pt}
\end{wrapfigure}

To ensure the reliability of our diagnosis, all experiments are repeated across three independent runs. We compare these interventions against DAPO under identical training settings. As shown in~\autoref{fig:fig2_Norm_Noise_plot}, while the training trajectories exhibit minor fluctuations, the \textbf{peak performance remains} robust across all settings. Standard DAPO achieves a peak Pass@1 of $\approx 37\%$ (step 140), comparable to $\approx 38\%$ for Intervention A (step 280) and $\approx 36.8\%$ for Intervention B (step 140). This counterintuitive result suggests that variance in gradient magnitude does not compromise the optimal solution, implying that numerical scale is unlikely to be the primary cause of suboptimal convergence.

\subsection{Formalization: Decoupling Decision and Execution}
\label{sec:formalization}
Since gradient magnitude is not the bottleneck, we focus on the discrete mechanism: the \textbf{binary clipping decision}. To isolate its impact, we mathematically decouple the importance ratio's role in \textit{adjudication} versus \textit{optimization}.

\textbf{Trust Region Definition.} First, we define the advantage-dependent trust region $\mathcal{I}(\hat{A}_t)$ used in standard RLVR (e.g., DAPO's asymmetric bounds):
\begin{equation}
    \label{eq:trust_region}
    \mathcal{I}(\hat A_t)=
    \begin{cases}
    (-\infty, 1+\epsilon_{\text{high}}] & \text{if } \hat A_t > 0,\\
    [1-\epsilon_{\text{low}}, +\infty) & \text{if } \hat A_t < 0.
    \end{cases}
\end{equation}

\textbf{Dual Roles of the Ratio.} We distinguish between two functional forms of the importance ratio derived from $r_t(\theta)$:
\begin{align}
    r_{\text{dec},t} &\quad \text{(The Judge: determines the clipping mask)} \\
    r_{\text{exec},t} &\quad \text{(The Executor: drives the policy update)}
\end{align}
In standard algorithms, these are coupled ($r_{\text{dec}} = r_{\text{exec}} = r_t$). To analyze them separately, we introduce a perturbation $z_t$, allowing us to vary $r_{\text{exec},t} = r_{\text{dec},t} \cdot z_t$ independently.

\textbf{Decoupled Gradient Mechanics.} The hard-clipping operator in PPO / GRPO acts as a conditional gate. We can express the effective gradient contribution of a token as:
\begin{equation}
    \label{eq:decoupled_grad}
    g_t = \underbrace{\mathbb{I}\big(r_{\text{dec},t} \in \mathcal{I}(\hat{A}_t)\big)}_{\text{Binary Decision } (m_t)} \cdot \underbrace{\nabla_\theta \big( r_{\text{exec},t} \hat{A}_t \big)}_{\text{Execution Gradient}}
\end{equation}
Here, $\mathcal{I}(\hat{A}_t)$ is the advantage-dependent trust region. \autoref{eq:decoupled_grad} highlights the causal chain: $r_{\text{dec}}$ exclusively controls the binary mask $m_t$ (whether the gradient flows), while $r_{\text{exec}}$ determines the magnitude of the update if admitted.

\subsection{Isolating the Clipping Decision}

We hypothesize that instability stems from the rigidity of the binary decision $m_t$ defined above. To test this, we utilize the formalism in~\autoref{eq:decoupled_grad} to conduct a controlled intervention using noise $z \sim \mathcal{U}(0.8, 1.2)$:

\begin{itemize}[leftmargin=*, nosep]
    \item \textbf{Coupled Ratio Noise (Noisy Decision; C):} We perturb both roles simultaneously ($r_{\text{dec}} = r_{\text{exec}} = r_t \cdot z$). This conflates the clipping mask with the gradient value, allowing noise to erroneously flip the mask $m_t$.

    \item \textbf{Decoupled Ratio Noise (Clean Decision; D):} We perturb only the execution value ($r_{\text{exec}} = r_t \cdot z$) while keeping the decision clean ($r_{\text{dec}} = r_t$). This ensures the original signal determines the mask $m_t$, while noise modulates only the update magnitude.
\end{itemize}
We emphasize that this intervention is a diagnostic probe rather than the proposed training rule. Its purpose is to causally separate the clipping decision from the execution magnitude.

\begin{figure}[ht]
    \centering
    \includegraphics[width=0.95\textwidth]{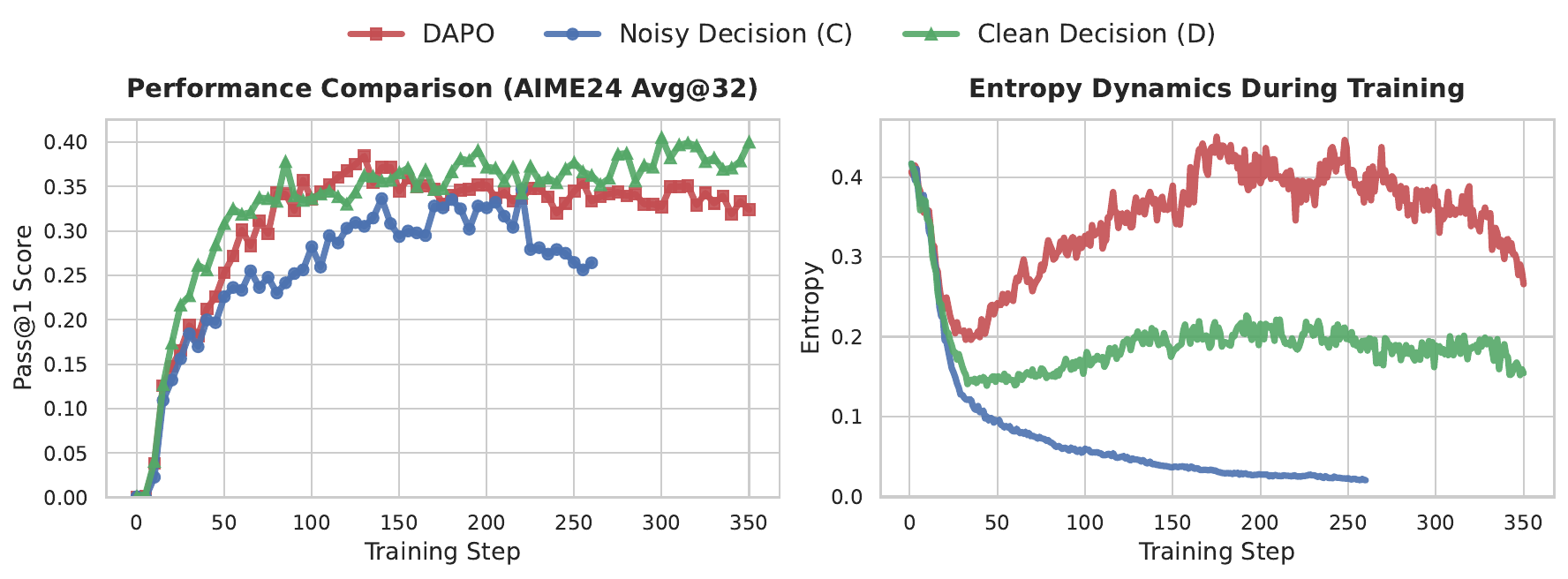}
    \caption{
        \textbf{Decision Sensitivity.} \textbf{Noisy Decision} (C) interferes with the clipping mask, leading to collapse. In contrast, \textbf{Clean Decision} (D) preserves the decision while perturbing gradients, which stabilizes training and outperforms the baseline.
    }
    \label{fig:fig3_Naive_Noise_Noise_Decoupled_Clipping}
\end{figure}

The results in~\autoref{fig:fig3_Naive_Noise_Noise_Decoupled_Clipping} reveal a stark divergence. \textbf{Noisy Decision (C)} leads to significant instability and entropy collapse. Mechanistically, this occurs because random perturbations systematically flip the keep/drop decision near the boundary—erroneously masking in-bound updates or admitting out-of-bound ones.
Conversely, \textbf{Clean Decision (D)} stabilizes training and delivers consistent performance gains (e.g. $\approx 40\%$ vs $37\%$ baseline), despite the presence of comparably scaled noise. The fact that identical noise triggers collapse in (C) but yields gains in (D) isolates the \textit{alteration of the clipping decision} as the key causal factor in this controlled comparison.

\begin{tcolorbox}[takeawaysbox]
\textbf{The Decision is a Key Bottleneck.}
In the studied clipping-based RLVR setup, optimization is robust to gradient magnitude but hypersensitive to the clipping decision. Instability arises when the binary verdict changes, identifying the hard boundary as a rigid bottleneck that can discard valuable signals near the trust region.
\end{tcolorbox}

\section{Mechanism Analysis: Where the Gains Come From}
\label{sec:mechanism}

The effect of \textbf{Decoupled Ratio Noise} (D) implies that the performance gain arises from the discrepancy between the decision ratio ($r_{\text{dec}}$) and the execution ratio ($r_{\text{exec}}$). In this section, we demonstrate this mechanism through visualization and targeted ablation studies to pinpoint the exact source of improvement.

\subsection{Visualizing Boundary Misclassification}

To understand the impact of the decoupled noise intervention (Sec.~3.3), we examine the relationship between the decision ratio $r_{\text{dec}}$ and the execution ratio $r_{\text{exec}}$.

\begin{figure}[ht]
    \centering
    \includegraphics[width=0.7\textwidth]{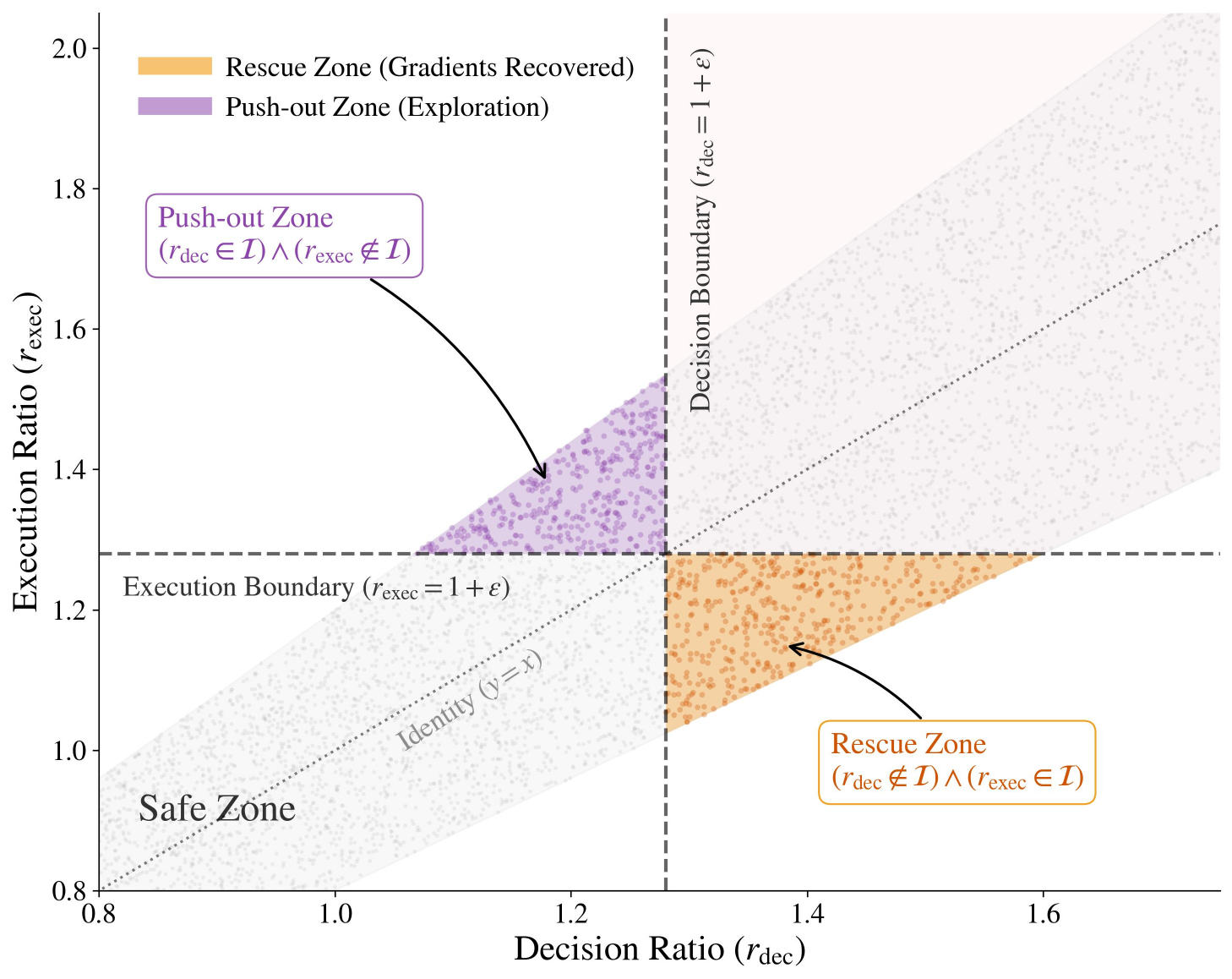}
    \caption{
    Two-dimensional landscape of Decision ($r_{\text{dec}}$) vs. Execution ($r_{\text{exec}}$).
    \textbf{Rescue Zone (Yellow):} Out-of-bound tokens ($r_{\text{dec}} \notin \mathcal{I}$) are recovered when noise pulls their execution value back into the trust region.
    \textbf{Push-out Zone (Purple):} In-bound tokens are pushed out ($r_{\text{exec}} \notin \mathcal{I}$), effectively exaggerating their update magnitude.
    }
    \label{fig: fig4_tsr_mechanism_zoomed}
\end{figure}

Under standard training, these ratios are strictly coupled ($r_{\text{dec}} = r_{\text{exec}}$), confining all token updates to the diagonal. However, noise $z$ breaks this constraint ($r_{\text{exec}} = r_{\text{dec}} \cdot z$), scattering the updates into a two-dimensional space as visualized in~\autoref{fig: fig4_tsr_mechanism_zoomed}. By mapping the trust region boundaries onto this 2D space, we categorize every token update into three distinct zones based on the coordinates $(r_{\text{dec}}, r_{\text{exec}})$:

\begin{itemize}[leftmargin=*, nosep]
    \item \textbf{Safe Zone ($r_{\text{dec}}, r_{\text{exec}} \in \mathcal{I}$):} Both ratios remain within the trust region.
    \item \textbf{Rescue Zone ($r_{\text{dec}} \notin \mathcal{I}, r_{\text{exec}} \in \mathcal{I}$):} Originally clipped tokens are stochastically pulled back into the trust region, recovering gradients that were otherwise censored.
    \item \textbf{Push-out Zone ($r_{\text{dec}} \in \mathcal{I}, r_{\text{exec}} \notin \mathcal{I}$):} Originally in-bound tokens are pushed out by noise. While retained by the mask, their update magnitudes are effectively exaggerated beyond the trust region.
\end{itemize}

\begin{figure}[ht]
    \centering
    \includegraphics[width=0.95\textwidth]{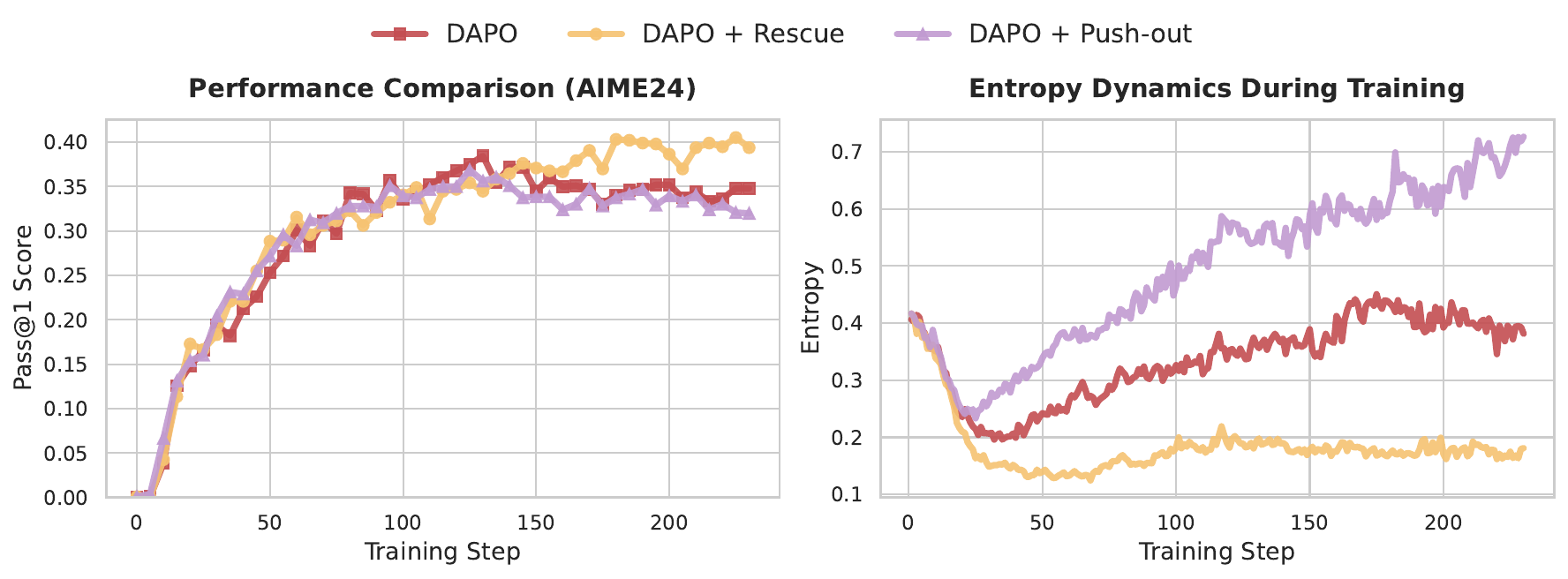}
    \caption{
    \textbf{Targeted Ablation: Rescue is the Driver.} \textbf{Left:} \textbf{Only-Rescue} (Yellow) replicates the performance gains of the full decoupled method, significantly outperforming the Baseline, while \textbf{Only-Push-out} (Purple) yields no gain. \textbf{Right:} Only-Rescue maintains stable policy entropy, whereas Only-Push-out triggers a substantial entropy increase.
    }
    \label{fig5: fig5_Rescue_Push-out}
\end{figure}

\subsection{Targeted Ablations: Rescue vs. Push-Out}

The decoupled noise intervention (Sec.~3.3) simultaneously activates both the \textbf{Rescue} and \textbf{Push-out} zones identified above. To determine which mechanism drives the performance gain, we disentangle them via conditional masking:

\begin{itemize}[leftmargin=*, nosep]
   \item \textbf{Only-Rescue (Baseline + Rescue):} We apply the decoupled noise mechanism \textit{only} to tokens falling into the \textbf{Rescue Zone}. For all other tokens (including Safe and Push-out zones), we revert to the baseline behavior (standard hard clipping).
   \item \textbf{Only-Push-out (Baseline + Push-out):} We apply noise \textit{only} to tokens in the \textbf{Push-out Zone}, forcing their execution values out of bounds. All other tokens follow the baseline logic.
\end{itemize}

All ablation runs are similarly repeated three times, and the results in~\autoref{fig5: fig5_Rescue_Push-out} show a clear pattern. \textbf{Only-Push-out} yields no significant improvement (and even slight degradation) while triggering a substantial entropy increase, indicating that artificially exaggerating safe updates is ineffective and may destabilize training. Conversely, \textbf{Only-Rescue} replicates the full performance trajectory of the decoupled intervention (D) while maintaining stable entropy. This suggests that the observed gains primarily stem from recovering informative gradients that are otherwise censored by the hard boundary.

\begin{tcolorbox}[takeawaysbox]
\textbf{Targeted Rescue is the Main Driver.} Our ablations suggest that performance gains primarily arise from recovering gradients in the ``Rescue Zone,'' while the ``Push-out'' effect contributes little in our setup. This indicates that standard hard clipping can act as an over-aggressive filter that suppresses valuable learning signals just beyond the trust region.
\end{tcolorbox}

\section{A Minimal Fix: NSR}

Building on the diagnosis in~\autoref{sec:mechanism}, we propose \textbf{Near-boundary Stochastic Rescue (NSR)}. NSR operationalizes the \textit{Only-Rescue} principle: it strictly adheres to the baseline decision logic for in-bound tokens, while selectively attempting to recover gradients for out-of-bound tokens by stochastically mapping them back into the trust region.

\subsection{The Algorithm}
Building on the decoupled formalism in~\autoref{sec:formalization}, NSR aims to recover gradients in the \textit{Rescue Zone} identified in~\autoref{sec:mechanism}. We maintain the \textbf{Judge} as the original ratio ($r_{\text{dec},t} = r_t$) to preserve the trust-region semantics. However, we define a stochastic \textbf{Executor} that conditionally overrides the hard boundary. Specifically, we sample $z_t \sim \mathcal{U}(1-\delta, 1+\delta)$ and set $r_{\text{exec},t}=r_{\text{dec},t}\cdot z_t$.

Here, $r_t$ denotes the original importance ratio, $r_{\text{dec},t}=r_t$ is the decision ratio used for the clipping judgment, $r_{\text{exec},t}=r_{\text{dec},t}\cdot z_t$ is the stochastic candidate execution ratio, and $\tilde r_t$ is the final effective ratio used in the surrogate objective. The effective ratio used for optimization is:

\begin{equation}
\tilde{r}_t =
\begin{cases}
r_{\text{dec},t} & \text{if } r_{\text{dec},t} \in \mathcal{I} \, \\
r_{\text{exec},t} & \text{if } r_{\text{dec},t} \notin \mathcal{I} \land r_{\text{exec},t} \in \mathcal{I} \ \\
\operatorname{clip}(r_{\text{dec},t}, \mathcal{I}) & \text{otherwise}.
\end{cases}
\label{eq:nsr_algorithm}
\end{equation}

\autoref{eq:nsr_algorithm} highlights the \textbf{surgical nature} of NSR. Case 1 preserves the baseline behavior for in-bound tokens. Case 2 is the only rescue case: an originally out-of-bound token is admitted only when its stochastic execution ratio falls back into the trust region. Case 3 falls back to standard clipping for tokens that remain out of bounds after the rescue attempt. Thus, NSR is not a global relaxation of the trust region, but a conditional intervention that \textbf{rescues exclusively on out-of-bound tokens} near the boundary. This makes NSR a boundary-local, plug-and-play modification with negligible computational overhead.

\begin{figure}[t]
    \centering
    \includegraphics[width=\textwidth]{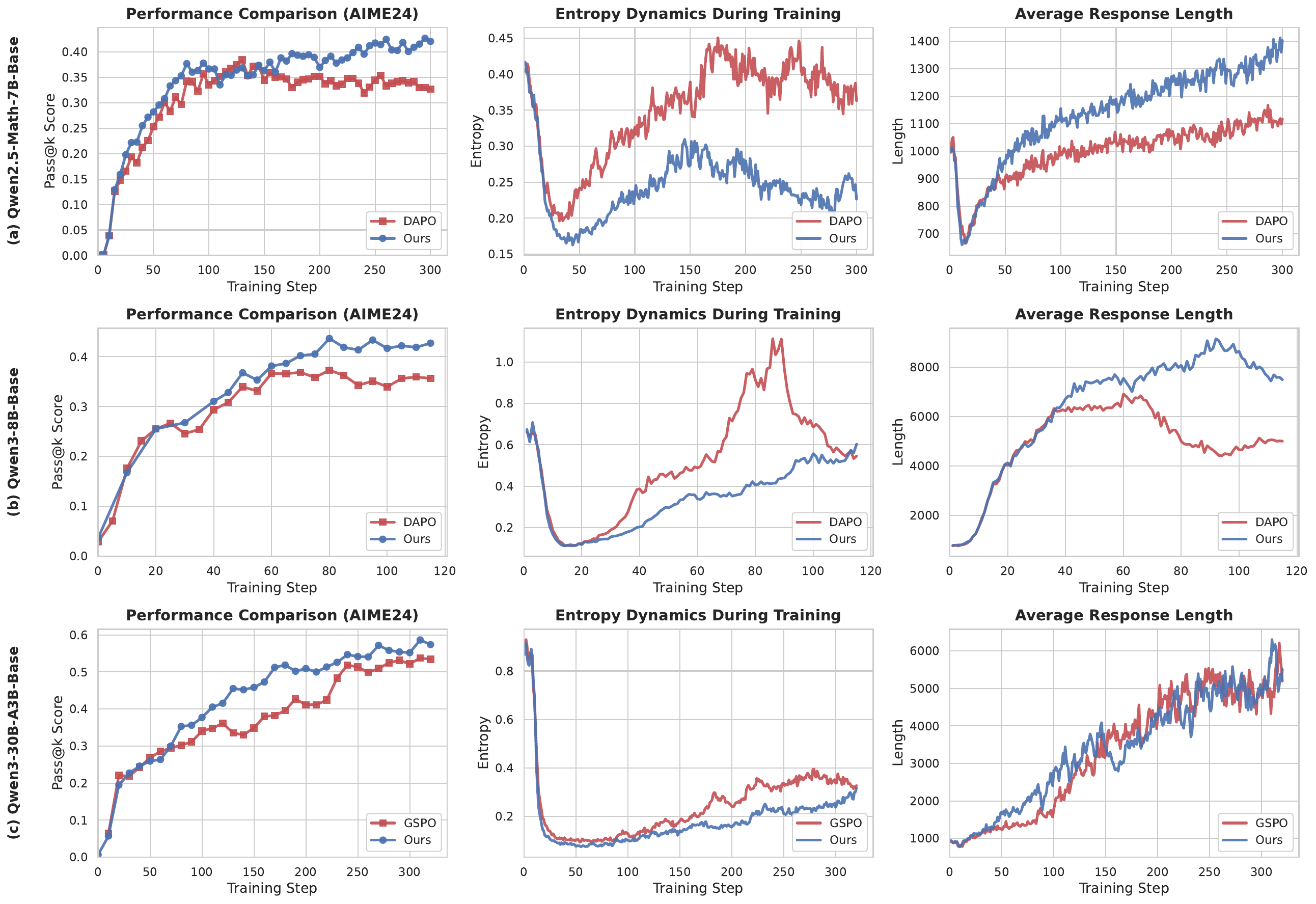}
    \caption{
    \textbf{Training dynamics across three model scales.} Each row corresponds to a model scale---\textbf{(a)}~Qwen2.5-Math-7B, \textbf{(b)}~Qwen3-8B, \textbf{(c)}~Qwen3-30B-A3B---with columns showing validation performance on AIME24 (left), policy entropy (center), and average response length (right). Across all scales, NSR converges faster and reaches higher peak performance than the corresponding baselines (DAPO for 7B/8B, GSPO for 30B). Policy entropy under NSR remains more stable, notably avoiding the entropy spike observed in the Qwen3-8B baseline. Response length increases steadily, consistent with rescued near-boundary gradients promoting longer reasoning chains.
        }
    \label{fig5: main_plot}
\end{figure}

\begin{table}[t]
\caption{
    \textbf{Main results on math reasoning benchmarks.} Comparison of NSR with strong baselines (DAPO and GSPO) across model scales and architectures.
    \textcolor{+}{Red} denotes an improvement and \textcolor{-}{blue} a decline.
}
  \label{tab:main}
  \centering
  \resizebox{\textwidth}{!}{
  \begin{tabular}{@{} l @{\hspace{2pt}} c @{\hspace{2pt}} l llllll @{}}
  \toprule
  \multirow{2}{*}{\textbf{Model}} & \multirow{2}{*}{\textbf{Setting}} & \multirow{2}{*}{\textbf{Method}} & \multicolumn{2}{c}{\textbf{AIME24}} & \multicolumn{2}{c}{\textbf{AIME25}} & \multicolumn{2}{c}{\textbf{AMC}}  \\ \cmidrule(l){4-9}
   & & & Pass@1 & Pass@16 & Pass@1 & Pass@16 & Pass@1 & Pass@16  \\ \midrule
  \multirow{2}{*}{\begin{tabular}[c]{@{}l@{}}Qwen2.5-\\ Math-7B\end{tabular}} & \multirow{2}{*}{Dense + 8k} & DAPO  & 35.83 & 55.05 & 15.90 & 30.70 & 69.28 & 88.24  \\
   & & \quad + NSR &
      $40.76^{\color{+}+4.93}$ &
      $57.70^{\color{+}+2.65}$ &
      $17.57^{\color{+}+1.67}$ &
      $32.78^{\color{+}+2.08}$ &
      $76.74^{\color{+}+7.46}$ &
      $89.36^{\color{+}+1.12}$ \\
   \midrule
  \multirow{2}{*}{\begin{tabular}[c]{@{}l@{}}Qwen3-\\ 8B-Base\end{tabular}} & \multirow{2}{*}{Dense + 30k} & DAPO  & 37.29 & 75.18 & 27.39 & 48.98 & 71.42 & 88.52  \\
   & & \quad + NSR &
      $43.65^{\color{+}+6.36}$ &
      $75.39^{\color{+}+0.21}$ &
      $31.67^{\color{+}+4.28}$ &
      $55.87^{\color{+}+6.89}$ &
      $74.92^{\color{+}+3.50}$ &
      $91.28^{\color{+}+2.76}$ \\
   \midrule
  \multirow{2}{*}{\begin{tabular}[c]{@{}l@{}} Qwen3-30B-\\ A3B-Base\end{tabular}}
   & \multirow{2}{*}{MoE + 20k} & GSPO & 54.17 & 79.74 & 38.44 & 56.67 & 81.66 & 92.01 \\
   & & \quad + NSR &
      $58.65^{\color{+}+4.48}$ &
      $83.26^{\color{+}+3.52}$ &
      $40.53^{\color{+}+2.09}$ &
      $64.83^{\color{+}+8.16}$ &
      $85.09^{\color{+}+3.43}$ &
      $94.37^{\color{+}+2.36}$ \\
   \bottomrule
  \end{tabular}
  }
\end{table}

\subsection{Experiments}
\subsubsection{Experimental setup.}
We evaluate NSR on standard math benchmarks (AIME24/25, AMC) and general tasks (GPQA, MMLU-Pro) across three model scales: Qwen2.5-Math-7B (Dense), Qwen3-8B (Dense), and Qwen3-30B (MoE). All evaluations are zero-shot ($k=32, T=1.0$). We compare NSR against DAPO, keeping all hyperparameters identical except for the rescue mechanism. By default, NSR uses a rescue window of $\delta=0.1$ (i.e., $z \sim \mathcal{U}(0.9, 1.1)$). Full hyperparameters are detailed in Appendix.

\subsubsection{Main results.}
\textbf{Consistent gains across domains and architectures.}~\autoref{tab:main} and~\autoref{tab:general} summarize the performance.
On math benchmarks (\autoref{tab:main}), NSR delivers consistent Pass@1 and Pass@16 improvements across almost all models, notably boosting the 7B model by an average of over 10\%.
\textbf{On general reasoning} (\autoref{tab:general}), NSR yields significant gains on the STEM-focused GPQA dataset and maintains competitive performance on MMLU-Pro, demonstrating that the benefits of stochastic rescue generalize robustly beyond math tasks.

\begin{wraptable}{r}{0.48\textwidth}
\vspace{-12pt}
\caption{
    Results on general science benchmarks (GPQA and MMLU-Pro \citep{wang2024mmlu}).
}
  \label{tab:general}
  \centering
  \setlength{\tabcolsep}{4pt}
  \small
  \begin{tabular}{llll}
  \toprule
  \textbf{Model} & \textbf{Method} & \textbf{GPQA} & \textbf{MMLU-Pro} \\ \midrule
  \multirow{2}{*}{\begin{tabular}[c]{@{}l@{}}Qwen2.5-\\ Math-7B\end{tabular}} & DAPO  & 39.27 & 44.09 \\
   & \quad + NSR &
      $42.99^{\color{+}+3.72}$ &
      $46.45^{\color{+}+2.36}$ \\
   \midrule
  \multirow{2}{*}{\begin{tabular}[c]{@{}l@{}}Qwen3-\\ 8B-Base\end{tabular}} & DAPO  & 52.02 & 65.67 \\
   & \quad + NSR &
      $53.16^{\color{+}+1.14}$ &
      $66.41^{\color{+}+0.74}$ \\
   \midrule
  \multirow{2}{*}{\begin{tabular}[c]{@{}l@{}} Qwen3-30B-\\ A3B-Base\end{tabular}}
   & GSPO & 58.96 & 72.08 \\
   & \quad + NSR &
      $59.22^{\color{+}+0.26}$ &
      $74.27^{\color{+}+2.19}$ \\
   \bottomrule
  \end{tabular}
\vspace{-10pt}
\end{wraptable}

\textbf{Training Dynamics and Stability.} As shown in~\autoref{fig5: main_plot}, NSR yields significantly more stable optimization trajectories across all three model scales.
Validation accuracy (left column) converges faster and reaches higher peaks under NSR.
Policy entropy (center column) remains more stable, notably avoiding the entropy spike observed in the Qwen3-8B baseline.
Response length (right column) exhibits steady growth, suggesting that rescued near-boundary gradients encourage the model to develop deeper reasoning chains.

\textbf{Mechanism and Universality.}
NSR recovers near-boundary signals to effectively lower the clip fraction (see Appendix~\autoref{fig:clip_frac_compare}), directly enhancing sample efficiency.
As a plug-and-play solution, NSR generalizes beyond token-level clipping (DAPO) to sequence-level clipping (GSPO), where it adapts to rescue entire discarded sequences, demonstrating robustness across optimization granularities.

\begin{figure}[t]
    \centering
    \includegraphics[width=\textwidth]{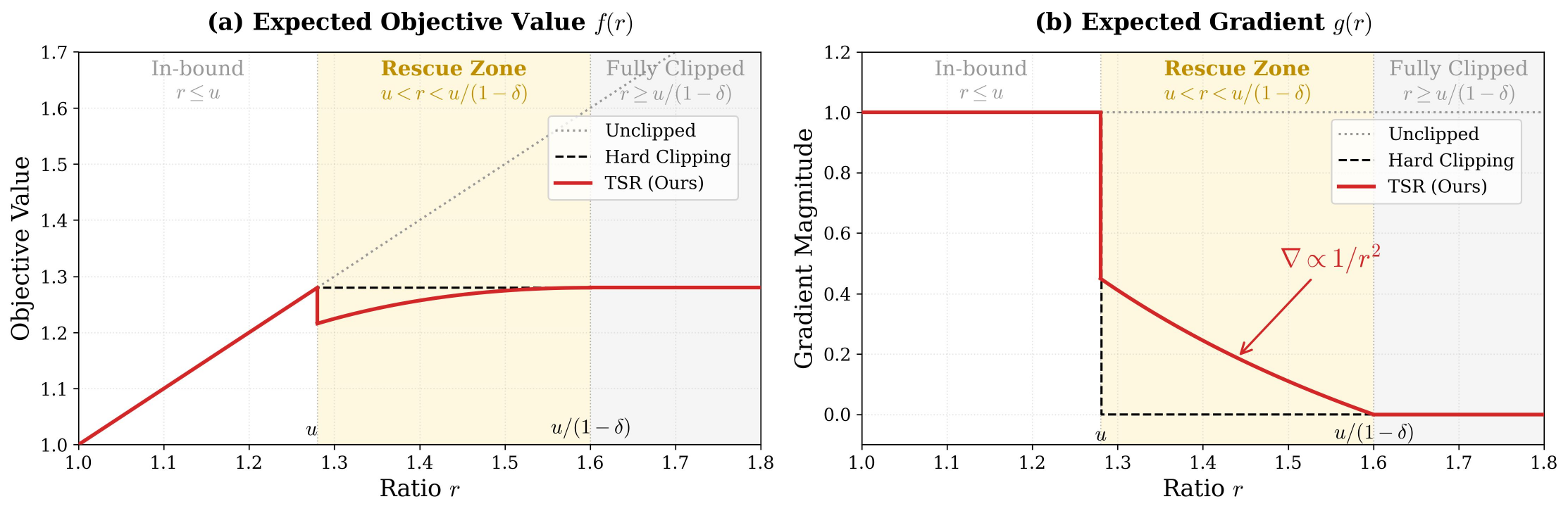}
    \caption{
    \textbf{NSR induces implicit soft clipping in expectation.}
    \textbf{Left:} The expected effective ratio $f(r)$ transitions from the linear identity to a smooth saturation curve within the rescue zone ($u < r < \frac{u}{1-\delta}$).
    \textbf{Right:} The corresponding expected gradient $g(r)$ replaces the binary gate of hard clipping with a smooth decay dominated by the $O(1/r^2)$ term in~\autoref{eq:grad_decay}.
    }
    \label{fig7:fig7_fr_gard}
\end{figure}

\section{Theory and Validation}
\label{sec:theory}
NSR is a minimal operator: stochastic near-boundary admission. We ask: (i) why does it stabilize RLVR, and (ii) why is it more effective than deterministic gradient decay?  We answer (ii) via layered ablations (signal existence, softness, stochastic robustness), and answer (i) via an expectation-level interpretation in which NSR induces an implicit soft-clipping--like profile with $\approx O(1/r^2)$ gradient attenuation.

\renewcommand{\subsectionautorefname}{Appendix}
\renewcommand{\sectionautorefname}{Appendix}
\subsection{Theoretical Derivation}
We omit the step index $t$ and derive the case $\hat{A}>0$, where the active constraint is the upper bound $u$ (e.g., $u=1+\epsilon$). The case $\hat{A}<0$ is symmetric with the lower bound $l$ (visualized in~\autoref{fig:appendix:symmetric-zones} in~\autoref{app:lower_bound_results}).
\renewcommand{\subsectionautorefname}{Section}
\renewcommand{\sectionautorefname}{Section}

In NSR, the stochastic rescue process transforms the original hard-clipped ratio into a random variable $\tilde{r}(r)$:
\begin{equation}
\tilde{r}(r) =
\begin{cases}
r & \text{if } r \le u \quad \text{(Safe)} \\
\min(r \cdot z, u) & \text{if } r > u \quad \text{(Rescue Attempt)}
\end{cases}
\end{equation}
where $z \sim \mathcal{U}(1-\delta, 1+\delta)$. For $r>u$, NSR rescues the token if $rz \le u$ (using $\tilde r = rz$); otherwise $\tilde r = u$ and the boundary contributes zero gradient.

\begin{restatable}[Expected Effective Ratio]{proposition}{ExpectedEffectiveRatio}
\label{pro:expected_ratio}
The expected objective of NSR, $f(r) \triangleq \mathbb{E}_z[\tilde{r}(r)]$, admits the following closed-form solution. For the rescue zone ($u < r < \frac{u}{1-\delta}$):
\begin{equation}
    f(r) = \frac{1}{2\delta}\left(u(1+\delta)-\frac{u^2}{2r}-\frac{(1-\delta)^2 r}{2}\right).
\end{equation}
For deep violations ($r \ge \frac{u}{1-\delta}$), the expectation saturates to the constant $u$.
\end{restatable}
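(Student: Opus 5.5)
The plan is a direct computation of $\mathbb{E}_z[\min(rz,u)]$ for fixed $r>u$, with $z$ having density $\frac{1}{2\delta}$ on $[1-\delta,1+\delta]$. The map $z\mapsto rz$ is increasing, so $\min(rz,u)=rz$ exactly when $z\le z^\ast \triangleq u/r$, and equals $u$ otherwise. Since $r>u$, we have $z^\ast<1<1+\delta$, so the threshold never exceeds the upper end of the support. The case split is therefore governed only by where $z^\ast$ sits relative to $1-\delta$.

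\textbf{Case 1: deep violation.} Suppose $r\ge \frac{u}{1-\delta}$. Then $z^\ast\le 1-\delta$, so $rz\ge u$ almost surely. Hence $\tilde r(r)=u$ a.s., and $f(r)=u$, which is the claimed saturation.

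\textbf{Case 2: rescue zone.} Suppose $u<r<\frac{u}{1-\delta}$, so that $z^\ast\in(1-\delta,1)$. Split the expectation at $z^\ast$:
\begin{equation*}
f(r)=\frac{1}{2\delta}\left(\int_{1-\delta}^{u/r} rz\,dz+\int_{u/r}^{1+\delta} u\,dz\right).
\end{equation*}
The first integral is $\frac{r}{2}\big(\frac{u^2}{r^2}-(1-\delta)^2\big)=\frac{u^2}{2r}-\frac{(1-\delta)^2 r}{2}$. The second is $u(1+\delta)-\frac{u^2}{r}$. Adding them gives $u(1+\delta)-\frac{u^2}{2r}-\frac{(1-\delta)^2 r}{2}$, and multiplying by $\frac{1}{2\delta}$ yields the stated formula.

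As a sanity check, I would verify continuity at $r=\frac{u}{1-\delta}$, where the bracket evaluates to $2\delta u$, so $f=u$ and the two cases agree. At $r=u$ the formula gives $\frac{1}{2\delta}\big(u(1+\delta)-\frac{u}{2}-\frac{(1-\delta)^2u}{2}\big)$, which is continuous with the identity branch only up to the expected jump below the boundary.

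The main obstacle is bookkeeping rather than conceptual: locating $z^\ast$ correctly within the support in each regime. In particular, the upper end $1+\delta$ never binds because $r>u$. The remaining work is routine integration.
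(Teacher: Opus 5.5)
Your proposal is correct and follows essentially the same route as the paper's proof: you split the expectation of $\min(rz,u)$ at the critical value $z_c=u/r$, note that only the lower endpoint $1-\delta$ matters, integrate against the density $\frac{1}{2\delta}$, and obtain saturation $f=u$ when $z_c\le 1-\delta$. Your side remark about $r=u$ is also accurate and worth stating precisely: the rescue-zone formula tends to $u(1-\delta/4)$ as $r\to u^{+}$, so $f$ has a genuine jump at $u$, which does not affect the proposition since it only concerns $r>u$.
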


Differentiation of $f(r)$ yields the effective gradient scaling:
\begin{restatable}[Gradient Decay Profile]{corollary}{graddecay}
\label{cor:gradient_profile}
Within the rescue zone ($u < r < \frac{u}{1-\delta}$), the expected gradient of NSR, $g(r) \triangleq \frac{\mathrm{d}}{\mathrm{d}r}f(r)$, follows:
\begin{equation}
\label{eq:grad_decay}
g(r) = \frac{1}{4\delta} \left( \frac{u^2}{r^2} - (1-\delta)^2 \right).
\end{equation}
\end{restatable}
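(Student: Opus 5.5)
The corollary follows by differentiating the closed form in \cref{pro:expected_ratio} on the open rescue interval $u<r<\frac{u}{1-\delta}$. I will also check it independently by computing the expected pathwise gradient directly. That second route confirms that $g(r)$ really is the expected gradient NSR delivers, not only the slope of an averaged surrogate.

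\textbf{Step 1 (differentiate the closed form).} Inside the rescue zone,
$f(r)=\frac{1}{2\delta}\big(u(1+\delta)-\frac{u^2}{2r}-\frac{(1-\delta)^2 r}{2}\big)$ is smooth in $r$. Differentiating term by term, the constant $u(1+\delta)$ vanishes, $-\frac{u^2}{2r}$ gives $\frac{u^2}{2r^2}$, and the linear term gives $-\frac{(1-\delta)^2}{2}$. Factoring out $\frac12$ gives $g(r)=\frac{1}{4\delta}\big(\frac{u^2}{r^2}-(1-\delta)^2\big)$, which is the claimed expression. This step is routine.

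\textbf{Step 2 (cross-check via the pathwise gradient).} For fixed $z$, the map $r\mapsto\min(rz,u)$ is Lipschitz with derivative $z\,\mathbb{I}(z<u/r)$ almost everywhere. Since $|z\,\mathbb{I}(\cdot)|\le 1+\delta$ is bounded, dominated convergence lets us swap derivative and expectation. This gives $g(r)=\mathbb{E}_z[z\,\mathbb{I}(z<u/r)]$.

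In the rescue zone the threshold $u/r$ lies in $(1-\delta,1)$. Integrating against the uniform density $\frac{1}{2\delta}$ gives
$\frac{1}{2\delta}\int_{1-\delta}^{u/r} z\,\mathrm{d}z=\frac{1}{4\delta}\big(\frac{u^2}{r^2}-(1-\delta)^2\big)$, which matches Step 1. The same integral also re-derives \cref{pro:expected_ratio}. Splitting $\mathbb{E}[\min(rz,u)]$ at $z=u/r$ gives $\frac{r}{2}\big(\frac{u^2}{r^2}-(1-\delta)^2\big)$ on the rescued part plus $u\big(1+\delta-\frac{u}{r}\big)$ on the clipped part, and after dividing by $2\delta$ this yields the stated $f$.

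\textbf{Main obstacle.} The only subtle point is justifying the interchange of differentiation and expectation, since the integrand has a kink at $z=u/r$. Boundedness of the a.e.\ derivative, together with Lipschitz continuity in $r$, handles this.

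As a sanity check on the endpoints: at $r\to u^+$, $g\to\frac{1-(1-\delta)^2}{4\delta}=\frac{2-\delta}{4}$. At $r\to\frac{u}{1-\delta}$, $g\to 0$, so $g$ is continuous with the saturated regime, where $g\equiv 0$. The dominant $r$-dependence is the $u^2/(4\delta r^2)$ term, which gives the $O(1/r^2)$ decay stated in the paper.
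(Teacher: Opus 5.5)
Your proposal is correct and follows the paper's proof essentially step for step. The paper likewise differentiates the closed form of $f$ on the open rescue interval, then gives the same optional pathwise check $\frac{\mathrm d}{\mathrm dr}\mathbb{E}_z[\min(rz,u)]=\frac{1}{2\delta}\int_{1-\delta}^{u/r} z\,\mathrm{d}z$, justified by dominated convergence with the bound $1+\delta$. Your endpoint values, $g(u^+)=\frac{2-\delta}{4}$ and $g\to 0$ at $r=\frac{u}{1-\delta}$, agree with the boundary values recorded in the paper's piecewise summary.
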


\textbf{Mechanism Interpretation.}~\autoref{cor:gradient_profile} reveals that NSR replaces the binary censorship of hard clipping with a continuous weighting scheme. As visualized in~\autoref{fig7:fig7_fr_gard}, the gradient decays following an inverse-square law ($\approx O(1/r^2)$). This decay arises from two coupled effects: (1) \textbf{Decreasing Admission Probability} ($p_{\text{rescue}} \propto 1/r$), and (2) \textbf{Magnitude Modulation} (larger $r$ requires smaller $z$ to be admitted). This ``soft constraint'' applies symmetrically to both upper and lower boundaries (see Appendix~\autoref{fig:appendix:lowerbound-fr-grad} for the lower-bound profile), recovering valuable near-boundary signals while conservatively suppressing deep deviations.

\begin{table}[t]
\caption{
    \textbf{Stability analysis across independent runs.} We report the performance of each individual run and the aggregate Mean $\pm$ Standard Deviation (SD).
    While individual runs show stochastic fluctuations (e.g., explicit decay occasionally peaks), \textbf{NSR} demonstrates the best stability and highest aggregate performance.
}
\label{tab:ablation}
\centering
\begin{small}
\setlength{\tabcolsep}{5pt}
\resizebox{0.97\textwidth}{!}{
\begin{tabular}{l cccccccc}
\toprule
\multirow{2}{*}{\textbf{Method}} & \multicolumn{2}{c}{\textbf{Run 1}} & \multicolumn{2}{c}{\textbf{Run 2}} & \multicolumn{2}{c}{\textbf{Run 3}} & \multicolumn{2}{c}{\textbf{Mean $\pm$ SD}} \\
\cmidrule(lr){2-3} \cmidrule(lr){4-5} \cmidrule(lr){6-7} \cmidrule(l){8-9}
 & Pass@1 & Pass@16 & Pass@1 & Pass@16 & Pass@1 & Pass@16 & Pass@1 & Pass@16 \\
\midrule
DAPO (Baseline) & 37.19 & 58.00 & 35.10 & 51.48 & 35.21 & 55.68 &
    $35.83_{\pm 1.18}$ & $55.05_{\pm 3.30}$ \\
\midrule
Binary Thresholding & 37.70 & 57.72 & 39.58 & 56.64 & 36.25 & \textbf{58.53} &
    $37.84_{\pm 1.67}$ & $57.63_{\pm 0.95}$ \\
Explicit Decay ($k=2$) & 40.52 & 57.20 & 36.35 & \textbf{57.27} & 39.17 & 54.23 &
    $38.68_{\pm 2.13}$ & $56.23_{\pm 1.73}$ \\
Explicit Decay ($k=3$) & 39.90 & 56.04 & \textbf{39.89} & 53.95 & 37.08 & 57.24 &
    $38.96_{\pm 1.62}$ & $55.74_{\pm 1.66}$ \\
Explicit Decay ($k=4$) & 38.85 & 50.16 & 39.06 & 55.15 & 35.31 & 56.92 &
    $37.74_{\pm 2.11}$ & $54.08_{\pm 3.50}$ \\
\midrule
\textbf{NSR (Ours)} & \textbf{42.50} & \textbf{58.16} & 39.48 & 56.75 & \textbf{40.31} & 58.19 &
    $\mathbf{40.76_{\pm 1.56}}$ & $\mathbf{57.70_{\pm 0.82}}$ \\
\bottomrule
\end{tabular}
}
\end{small}
\end{table}

\subsection{Empirical Validation: A Layered Ablation}
Our theoretical analysis establishes that NSR induces an implicit gradient decay (approx. $\propto 1/r^2$) in expectation (\autoref{cor:gradient_profile}). This raises a fundamental question: \textbf{does the performance gain stem merely from this attenuation profile, or is the stochastic nature of the rescue essential?}

To answer this and address why one cannot simply apply a deterministic decay, we conduct a layered ablation that disentangles \textbf{signal admission} (existence), \textbf{magnitude modulation} (shape), and \textbf{stochasticity} (robustness).
In this analysis, let $r$ denote the clean decision ratio $r_{\text{dec}}$ and $u$ the active trust-region bound (e.g., $1+\epsilon_{\text{high}}$). We compare:

\begin{itemize}[leftmargin=*, nosep]
  \item \textbf{Baseline (Hard Clipping):} DAPO, which strictly zeroes out gradients when $r \notin \mathcal{I}$.
    \vspace{3.5pt}

  \item \textbf{Binary Admission (No Modulation):} Uses perturbation $z$ solely to determine \textit{whether} to admit an out-of-bound token (i.e., if $r \cdot z$ falls back into $\mathcal{I}$). Upon admission, it uses the boundary value (effectively $\tilde r = u$).
    \vspace{3.5pt}

  \item \textbf{Explicit Soft Decay (Deterministic):} Removes stochasticity and applies a deterministic weight $w(r)$ to all out-of-bound tokens: $g_{\text{soft}}(r)=w(r)\,g_{\text{unclipped}}(r)$ with $w(r)\propto (u/r)^k$ for $k\in\{2,3,4\}$ (continuous at $r=u$).
\end{itemize}

\textbf{1. Admission suggests signal existence.} Compared to the Baseline, \textbf{Binary Admission} yields consistent performance gains (\autoref{tab:ablation}, Row 2) even without magnitude modulation. This provides direct evidence that useful learning signals can exist outside the clipping boundary in our studied setup; standard hard clipping may therefore introduce false negatives by censoring informative near-boundary updates.

\textbf{2. Soft Decay is beneficial but insufficient.}
\textbf{Explicit Soft Decay} (deterministic, $w(r)=(u/r)^k$) consistently improves over the Baseline across $k\in\{2,3,4\}$ (\autoref{tab:ablation}, Rows~3--5), indicating that smooth, conservative attenuation is an important ingredient for stability. However, the performance--stability trade-off depends on the decay shape (choice of $k$), and even the best deterministic decay still falls short of NSR in aggregate stability (especially in Pass@16).

\textbf{3. Stochastic filtering outperforms deterministic attenuation.}
While \textbf{Explicit Soft Decay} improves over the Baseline,~\autoref{tab:ablation} shows larger run-to-run variance than NSR, indicating that deterministic shrinking remains sensitive to noisy out-of-bound directions.
In contrast, \textbf{NSR (Ours)} delivers better aggregate consistency and performance.
The key distinction lies in how the two approaches handle out-of-bound signals: deterministic decay keeps every slightly out-of-bound direction persistently available, albeit with a smaller weight, whereas NSR admits such directions only probabilistically and distance-dependently.
NSR thus acts as a boundary-local stochastic filter---useful near-boundary signals are recovered in expectation, while transient or unreliable out-of-bound directions are rejected entirely rather than being repeatedly propagated with small deterministic weights.
This filtering mechanism accounts for the superior empirical stability of NSR observed across all experimental settings.

\section{Related Work}
\textbf{RLVR for Reasoning Scaling.}~\citep{guo2025deepseek, jaech2024openai,ma2026fipo,yang2026look} Following the success of OpenAI's o1~\citep{jaech2024openai} and DeepSeek-R1~\citep{guo2025deepseek}, RLVR has become the standard paradigm for scaling reasoning. While Group Relative Policy Optimization (GRPO)~\citep{shao2024deepseekmath} serves as a foundation, recent variants aim to improve its stability and efficiency. Methods like DAPO~\citep{yu2025dapo}, GSPO~\citep{zheng2025group}, and CISPO~\citep{chen2025minimax} refine sampling strategies and gradient estimation to mitigate training collapse~\citep{hu2025open, zeng2025simplerl, yue2025vapo}. In this work, we adopt DAPO---a strong open-source algorithm---as our primary baseline.

\textbf{Token-level Importance and Clipping.} Training stability in clipping-based RLVR can be strongly affected by the ``hard boundary'' effects inherent in policy constraint mechanisms~\citep{schulman2015trust}. The standard hard clipping in PPO~\citep{schulman2017proximal} and GRPO imposes a rigid binary verdict that can suppress informative learning signals when they fall slightly outside the trust region. To address this discontinuity, recent works have proposed explicit boundary softening techniques, such as adaptive gating (SAPO~\citep{gao2025soft}), asymmetric sampling (ASPO~\citep{wang2025aspo}, BAPO~\citep{xi2025bapo}), or probability smoothing (PSPO~\citep{dwyer2025s}, FSPO~\citep{mao2025clip}). Unlike explicit soft clipping, NSR employs a stochastic rescue mechanism to retain informative near-boundary signals.

\section{Conclusion}
We identify the rigid binary decision of hard clipping as a key bottleneck in the clipping-based RLVR settings studied in this work, where informative near-boundary signals can be discarded by the standard hard-clipping rule. To address this, we propose Near-boundary Stochastic Rescue (NSR), a minimal boundary-local modification that stochastically rescues slightly out-of-bound updates while preserving conservative policy constraints. While NSR is theoretically interpretable as implicit soft clipping with $O(1/r^2)$ decay in expectation, our ablations suggest that its effectiveness comes from combining signal recovery with boundary-local stochastic filtering. Overall, NSR provides a simple plug-and-play mechanism for improving the stability of clipping-based RLVR training.

\section*{Limitations and Future Work}
This work is scoped to clipping-based RLVR objectives with verifiable rewards and group-relative advantage estimation. We do not claim direct transfer to standard RL with learned value functions, dense rewards, or unclipped/KL-only objectives. NSR introduces one additional hyperparameter, the rescue window $\delta$, and broader tuning guidance across clip widths and algorithms remains future work. Our experiments are also limited to Qwen-family models. Finally, the interaction between NSR, MoE routing, and token-level versus sequence-level clipping remains underexplored and deserves further study.

\section{Authors}
\label{sec:authors}
Shuo Yang$^{1,2}$ \quad Jinda Lu$^{2}$ \quad Chiyu Ma$^{2,3}$ \quad
Kexin Huang$^{2}$ \quad Haoming Meng$^{2,4}$ \quad Qihui Zhang$^{1}$ \quad
Yuyang Liu$^{1}$ \quad Bolin Ding$^{5}$ \quad
Guoyin Wang$^{2}$ \quad
Li Yuan$^{1}$ \quad
Jingren Zhou$^{5}$
{\let\thefootnote\relax\footnotetext{$^{1}$Peking University; $^{2}$Qwen Pilot Team; $^{3}$Dartmouth College; $^{4}$University of Toronto; $^{5}$Alibaba.}}

\newpage
\bibliography{example_paper}
\bibliographystyle{colm2024_conference}

\newpage
\appendix

\section{Proofs}
\label{app:proofs_theory}

This appendix provides proofs for~\autoref{pro:expected_ratio} and~\autoref{cor:gradient_profile} in~\autoref{sec:theory}.
We follow the notation in the main text and consider the case of positive advantage ($\hat A>0$), where the active trust-region constraint is the upper bound $u$.
Recall that NSR defines the effective ratio random variable
\begin{equation}
\tilde r(r)=
\begin{cases}
r, & r\le u \quad \text{(Safe)},\\
\min(r z, u), & r>u \quad \text{(Rescue Attempt)},
\end{cases}
\qquad z\sim \mathcal U(1-\delta,\,1+\delta),
\label{eq:app_tilder_def}
\end{equation}
where $\delta\in(0,1)$ and the density of $z$ is $p(z)=\frac{1}{2\delta}$ on $[1-\delta,1+\delta]$.

\paragraph{A note on symmetry ($\hat A<0$).}
For negative advantage, the same derivation applies with the lower bound $l$ as the active constraint (i.e., rescue applies when $r<l$),
by replacing the upper-threshold event $\{r z\le u\}$ with the lower-threshold event $\{r z\ge l\}$ and reversing the relevant inequalities.
We omit the symmetric steps for brevity.

\subsection{Proof of~\autoref{pro:expected_ratio}}
\label{app:proof_expected_ratio}

\begin{proof}
Define $a\triangleq 1-\delta$ and $b\triangleq 1+\delta$.
For $r>u$, introduce the critical value
\begin{equation}
z_c(r)\triangleq \frac{u}{r}.
\label{eq:app_zc}
\end{equation}
Then for any fixed $r>u$ and any $z\in[a,b]$,
\[
\tilde r(r)=\min(rz,u)=
\begin{cases}
rz, & z\le z_c(r),\\
u, & z> z_c(r).
\end{cases}
\]

\paragraph{Step 1: piecewise regimes.}
The location of $z_c(r)$ relative to $[a,b]$ yields three regimes:
\begin{itemize}[leftmargin=*]
\item \textbf{Safe region ($r\le u$):} $\tilde r(r)=r$ deterministically, hence $f(r)\triangleq \mathbb E_z[\tilde r(r)]=r$.
\item \textbf{Rescue zone ($u<r<\frac{u}{1-\delta}$):} equivalently $a<z_c(r)<1$ and in particular $a<z_c(r)<b$.
\item \textbf{Deep violations ($r\ge \frac{u}{1-\delta}$):} equivalently $z_c(r)\le a$, so $rz\ge ra\ge u$ for all $z\in[a,b]$ and thus $\tilde r(r)=u$ deterministically, giving $f(r)=u$.
\end{itemize}

\paragraph{Step 2: compute $f(r)$ in the rescue zone.}
Assume $u<r<\frac{u}{1-\delta}$ so that $z_c(r)\in(a,b)$.
Using the density $p(z)=\frac{1}{2\delta}$ and the above partition,
\begin{align}
f(r)
=\mathbb E_z[\tilde r(r)]
&=\frac{1}{2\delta}\left(\int_{a}^{z_c(r)} rz\,\mathrm dz+\int_{z_c(r)}^{b} u\,\mathrm dz\right)\nonumber\\
&=\frac{1}{2\delta}\left(r\cdot\frac{z_c(r)^2-a^2}{2}+u\cdot(b-z_c(r))\right).\label{eq:app_f_intermediate}
\end{align}
Substituting $z_c(r)=u/r$ and $(a,b)=(1-\delta,1+\delta)$ into \autoref{eq:app_f_intermediate} yields
\begin{align*}
f(r)
&=\frac{1}{2\delta}\left(\frac{r}{2}\left(\frac{u^2}{r^2}-a^2\right)+u\left(b-\frac{u}{r}\right)\right)\\
&=\frac{1}{2\delta}\left(ub-\frac{u^2}{2r}-\frac{a^2 r}{2}\right)\\
&=\frac{1}{2\delta}\left(u(1+\delta)-\frac{u^2}{2r}-\frac{(1-\delta)^2 r}{2}\right),
\end{align*}
which matches~\autoref{pro:expected_ratio}. The saturation statement for $r\ge \frac{u}{1-\delta}$ follows from the ``Deep violations'' case above.
\end{proof}

\subsection{Proof of~\autoref{cor:gradient_profile}}
\label{app:proof_gradient_profile}

\begin{proof}
We prove the stated formula for the rescue zone $u<r<\frac{u}{1-\delta}$.
From Proposition~\ref{pro:expected_ratio}, in this interval
\[
f(r)=\frac{1}{2\delta}\left(u(1+\delta)-\frac{u^2}{2r}-\frac{(1-\delta)^2 r}{2}\right).
\]
Differentiating w.r.t.\ $r$ gives
\begin{align*}
g(r)\triangleq \frac{\mathrm d}{\mathrm dr}f(r)
&=\frac{1}{2\delta}\left(0-\frac{u^2}{2}\cdot(-r^{-2})-\frac{(1-\delta)^2}{2}\right)\\
&=\frac{1}{4\delta}\left(\frac{u^2}{r^2}-(1-\delta)^2\right),
\end{align*}
which is exactly~\autoref{eq:grad_decay}.

\paragraph{Equivalent ``differentiate-under-expectation'' view (optional).}
Define $\phi(r,z)\triangleq \min(rz,u)$ for $r>u$.
For each fixed $z\in[a,b]$, $\phi(r,z)$ is differentiable in $r$ for all $r\neq u/z$, with
\[
\frac{\partial}{\partial r}\phi(r,z)= z\cdot \mathbf 1\{rz<u\},
\]
and the non-differentiable points $r=u/z$ form a measure-zero set under $z$.
Moreover, $\left|\frac{\partial}{\partial r}\phi(r,z)\right|\le b$ is integrably bounded.
Thus, by a standard dominated-convergence / Leibniz argument, we may interchange derivative and expectation almost everywhere:
\[
\frac{\mathrm d}{\mathrm dr}\mathbb E_z[\phi(r,z)]
=\mathbb E_z\!\left[\frac{\partial}{\partial r}\phi(r,z)\right]
=\frac{1}{2\delta}\int_{a}^{z_c(r)} z\,\mathrm dz
=\frac{1}{4\delta}\left(z_c(r)^2-a^2\right)
=\frac{1}{4\delta}\left(\frac{u^2}{r^2}-(1-\delta)^2\right).
\]
This recovers the same gradient profile and makes explicit the inverse-square decay driven by the shrinking admissible interval $[a,z_c(r)]$.
\end{proof}

\subsection{Additional piecewise forms (for completeness)}
\label{app:piecewise_complete}

\renewcommand{\subsectionautorefname}{Appendix}
\renewcommand{\sectionautorefname}{Appendix}

For reference, combining the regimes in~\autoref{app:proof_expected_ratio} yields the full piecewise expressions:
\begin{equation}
f(r)=\mathbb E_z[\tilde r(r)]=
\begin{cases}
r, & r\le u,\\[2pt]
\frac{1}{2\delta}\left(u(1+\delta)-\frac{u^2}{2r}-\frac{(1-\delta)^2 r}{2}\right),
& u<r<\frac{u}{1-\delta},\\[6pt]
u, & r\ge \frac{u}{1-\delta},
\end{cases}
\end{equation}
and
\begin{equation}
g(r)=\frac{\mathrm d}{\mathrm dr}f(r)=
\begin{cases}
1, & r<u,\\[2pt]
\frac{1}{4\delta}\left(\frac{u^2}{r^2}-(1-\delta)^2\right),
& u<r<\frac{u}{1-\delta},\\[6pt]
0, & r>\frac{u}{1-\delta},
\end{cases}
\end{equation}
where boundary points can be defined consistently by continuity (e.g., $f(u)=u$ and $g(u^-)=1$, $g(u^+)=\frac{1}{4\delta}(1-(1-\delta)^2)$).

\renewcommand{\subsectionautorefname}{Section}
\renewcommand{\sectionautorefname}{Section}

\subsection{Symmetric Results for Lower Bound (\texorpdfstring{$\hat{A} < 0$}{\hat{A} < 0})}

\label{app:lower_bound_results}

For completeness, we provide the explicit forms for the negative advantage case ($\hat{A} < 0$), where the active constraint is the lower bound $l = 1 - \epsilon$.
Due to symmetry, the derivation follows Appendix A.1 and A.2 with inverted inequalities.

\textbf{Rescue Zone.}
The rescue mechanism activates when the ratio drops below the lower bound ($r < l$) but remains recoverable.
The symmetric rescue zone is defined as:
\begin{equation}
\label{eq:lowerbound_rescue_zone}
\frac{l}{1+\delta} < r < l.
\end{equation}
Tokens with $r \le \frac{l}{1+\delta}$ are considered deep violations and remain fully clipped (contributing zero gradient).

\textbf{Gradient Profile.}
Within the rescue zone in~\autoref{eq:lowerbound_rescue_zone}, the expected gradient $g(r)$ follows the symmetric inverse-square decay:
\begin{equation}
\label{eq:lowerbound_grad_decay}
g(r) = \frac{1}{4\delta} \left( (1+\delta)^2 - \frac{l^2}{r^2} \right).
\end{equation}
This confirms that NSR provides a consistent, smooth gradient attenuation mechanism for both upward and downward deviations from the trust region.

\begin{figure}[t]
    \centering
    \includegraphics[width=0.9\textwidth]{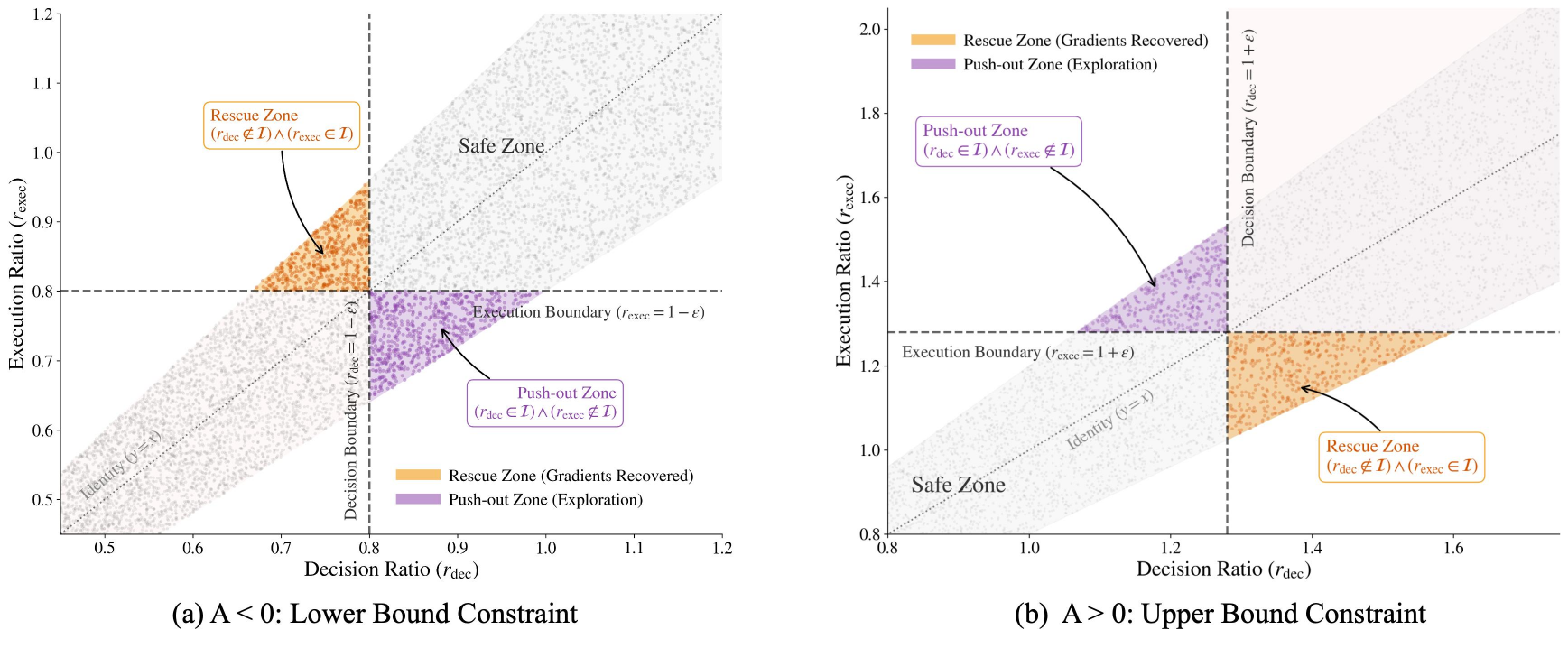}
    \caption{
    \textbf{Symmetric decision--execution geometry for upper and lower trust-region constraints.}
    We visualize token updates in the 2D space of decision ratio $r_{\mathrm{dec}}$ (judge) versus execution ratio $r_{\mathrm{exec}}$ (executor) under a decoupled perturbation.
    \textbf{Left ($\hat A<0$):} the active constraint is the lower bound $l=1-\epsilon$; the \emph{Rescue Zone} corresponds to $r_{\mathrm{dec}}<l$ but $r_{\mathrm{exec}}\ge l$, while the \emph{Push-out Zone} corresponds to $r_{\mathrm{dec}}\ge l$ but $r_{\mathrm{exec}}<l$.
    \textbf{Right ($\hat A>0$):} the active constraint is the upper bound $u=1+\epsilon$; the \emph{Rescue Zone} corresponds to $r_{\mathrm{dec}}>u$ but $r_{\mathrm{exec}}\le u$, and the \emph{Push-out Zone} corresponds to $r_{\mathrm{dec}}\le u$ but $r_{\mathrm{exec}}>u$.
    Across both cases, NSR targets the rescue region to probabilistically admit near-boundary updates while keeping deep violations clipped.
    }
    \label{fig:appendix:symmetric-zones}
\end{figure}

\begin{figure}[t]
    \centering
    \includegraphics[width=0.9\textwidth]{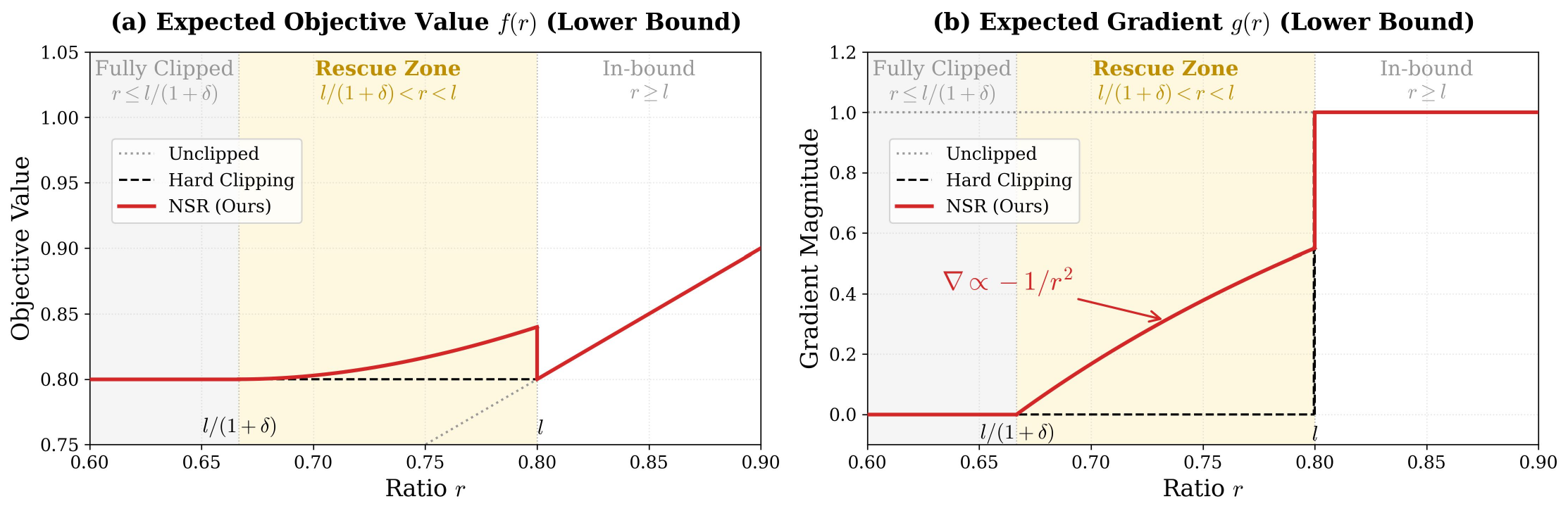}
    \caption{
    \textbf{Expectation-level soft clipping induced by NSR at the lower bound ($\hat A<0$).}
    \textbf{Left:} the expected effective ratio $f(r)=\mathbb{E}[\tilde r(r)]$ transitions from the unclipped identity to a smooth curve inside the lower-bound rescue zone $l/(1+\delta) < r < l$, and saturates at $l$ for deep violations ($r \le l/(1+\delta)$).
    \textbf{Right:} the corresponding expected gradient $g(r)=\frac{\mathrm{d}}{\mathrm{d}r}f(r)$ replaces the hard-clipping binary gate with a smooth attenuation following~\autoref{eq:lowerbound_grad_decay}.
    }
    \label{fig:appendix:lowerbound-fr-grad}
\end{figure}

\begin{figure}[ht]
    \centering
    \includegraphics[width=0.9\linewidth]{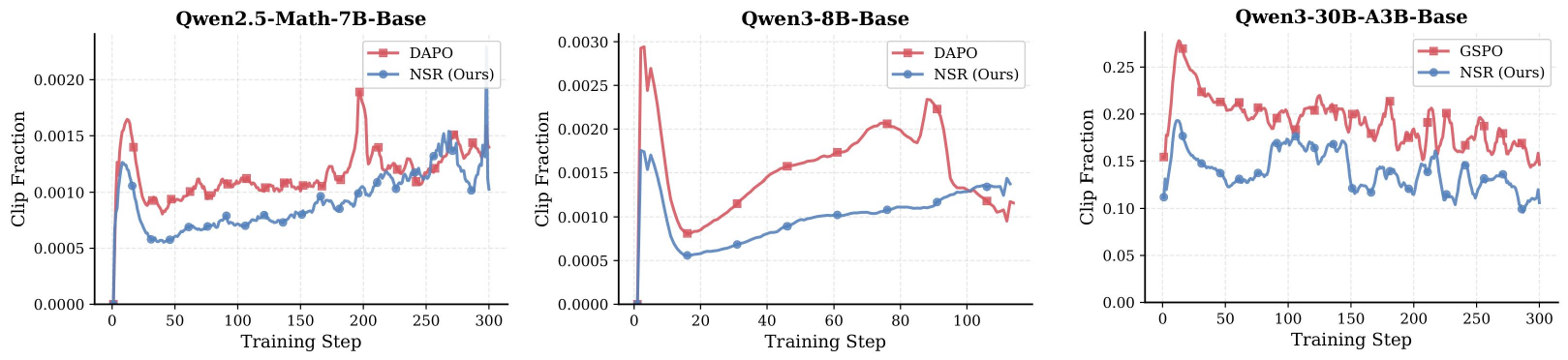}
    \caption{\textbf{Clip Fraction Analysis.}
    We monitor the fraction of clipped gradients throughout training across different model scales.
    \textbf{NSR (blue) consistently maintains a significantly lower clip fraction} compared to baselines (red).
    This reduction confirms that NSR effectively rescues valid near-boundary signals that would otherwise be discarded, thereby enhancing sample utilization.
    Crucially, this benefit validates the method's plug-and-play versatility: it reduces clipping not only in token-level objectives (DAPO, left and center) but also adapts to rescue entire sequences in sequence-level objectives (GSPO, right).}
    \label{fig:clip_frac_compare}
\end{figure}

\section{Experimental Setup}
\label{app:experimental_setup}

In this section, we provide comprehensive details regarding our experimental infrastructure, training configurations, and evaluation protocols to ensure reproducibility.

\subsection{Framework and Dataset}
We implement all algorithms using the \textbf{VERL} framework. For the training dataset, we utilize \textbf{dapo-math-17k} across all main experiments. This choice ensures a fair and controlled comparison with the baseline methods, as it aligns with the data setting used in the original DAPO and GSPO studies.

\subsection{Model Architectures and Constraints}
We evaluate our method across three representative model scales and architectures: \textbf{Qwen2.5-Math-7B} (Dense), \textbf{Qwen3-8B-Base} (Dense), and \textbf{Qwen3-30B-A3B-Base} (MoE). To accommodate the varying reasoning capabilities of these models, we enforce different maximum response length constraints during training:
\begin{itemize}
    \item \textbf{Qwen2.5-Math-7B:} Maximum response length of \textbf{8k} tokens.
    \item \textbf{Qwen3-8B-Base:} Maximum response length of \textbf{30k} tokens.
    \item \textbf{Qwen3-30B-A3B-Base:} Maximum response length of \textbf{20k} tokens.
\end{itemize}

\subsection{Training Implementation Details}
Our training configuration strictly follows the settings of the baselines (DAPO and GSPO) to isolate the contribution of our proposed method.

\paragraph{General Optimization Settings.}
Unless otherwise specified, we train with a global batch size of \textbf{512}. We employ gradient accumulation steps to manage memory, typically setting the mini-batch size to \textbf{32} with \textbf{16} accumulation steps. The learning rate is fixed at $10^{-6}$. Following the DAPO configuration, we exclude both KL divergence and entropy regularization losses from the objective.

\paragraph{Baseline-Specific Configurations.}
\begin{itemize}
    \item \textbf{DAPO Configuration:} We adopt the recommended hyperparameters from the DAPO literature, including clip-higher, dynamic sampling, token-level policy gradient loss, and overlong reward shaping. The clipping thresholds are set to $\epsilon_{high} = 0.28$ and $\epsilon_{low} = 0.2$.
    \item \textbf{GSPO Configuration (Cold-Start):} For experiments involving the GSPO algorithm, we conduct a cold-start training initialized from the \textit{Qwen3-30B-A3B-Base} model. Aligning with the official VERL implementation scripts for GSPO, the clipping ranges are set to stringent values of $3 \times 10^{-4}$ for the lower bound and $4 \times 10^{-4}$ for the upper bound.
\end{itemize}

\subsection{NSR Hyperparameters}
Our proposed Near-boundary Stochastic Rescue (NSR) introduces a perturbation range $z \sim \mathcal{U}(1-\delta, 1+\delta)$. The choice of $\delta$ depends on the underlying clipping mechanism of the base algorithm:
\begin{itemize}
    \item \textbf{NSR on DAPO (Token-level Clip):} We use a default rescue window of $\delta = 0.1$, corresponding to a noise range of $z \sim \mathcal{U}(0.9, 1.1)$.
    \item \textbf{NSR on GSPO (Sequence-level Clip):} Given the stricter sequence-level constraints in GSPO, we apply a much narrower rescue window. We set the noise range to $z \sim \mathcal{U}(0.999, 1.001)$ (i.e., $\delta = 0.001$) to maintain stability while recovering gradients.
\end{itemize}

\subsection{Evaluation Protocol}
We conduct a rigorous zero-shot evaluation across multiple benchmarks using a consistent sampling strategy with temperature $T=1.0$ and top-$p=0.7$.

\begin{itemize}
    \item \textbf{Mathematical Reasoning Benchmarks (AIME 24, AIME 25, AMC):} For these datasets, we sample $k=32$ completions for each problem and report \textbf{Pass@1} and \textbf{Pass@16}.
    \item \textbf{General Reasoning Benchmarks (GPQA, MMLU-Pro):} To assess generalizability, we evaluate on GPQA (STEM-focused) and MMLU-Pro. For these tasks, we sample $k=8$ completions per problem and report the average accuracy.
\end{itemize}

\section{Computing Resources}
The training budget for each setting is detailed as follows: (1) The 7B dense model required approximately 768 GPU hours per run, covering $300 \times 16$ gradient steps. (2) The 8B dense model required approximately 6,144 GPU hours per run, covering $150 \times 16$ gradient steps. (3) The 30B MoE model required approximately 9,216 GPU hours per run, covering $450 \times 16$ gradient steps. All experiments were repeated at least three times to ensure statistical reliability.

\end{document}